\documentclass[11pt]{article}

\usepackage[margin=1in]{geometry}
\usepackage{amsmath,amssymb,amsfonts,amsthm}
\usepackage{booktabs}
\usepackage{graphicx}
\usepackage{algorithm}
\usepackage{algpseudocode}
\usepackage{xcolor}
\usepackage{hyperref}
\usepackage[numbers,sort&compress]{natbib}
\usepackage{microtype}
\usepackage{enumitem}

\hypersetup{
    colorlinks=true,
    linkcolor=blue,
    citecolor=blue,
    urlcolor=blue
}

\newtheorem{definition}{Definition}

\newtheorem{proposition}{Proposition}
\newtheorem{remark}{Remark}

\newcommand{\norm}[1]{\left\lVert #1 \right\rVert}
\newcommand{\inner}[2]{\left\langle #1, #2 \right\rangle}

\newcommand{\polar}{\operatorname{polar}}
\newcommand{\srank}{\operatorname{srank}}

\title{Low-Rank Decay for Grokking in Scale-Invariant Transformers:\\
A Spectral-Geometric View}

\author{Mingyu Li\\Beijing Normal University
}

\date{Preliminary Draft}

\begin{document}
\maketitle

\begin{abstract}
Modern Transformer architectures frequently employ normalization mechanisms such as RMSNorm and Query-Key Normalization, making parts of the model approximately scale-invariant with respect to weight magnitudes. In this regime, standard Frobenius-norm weight decay acts purely along the radial direction of the weight space and cannot directly simplify the function represented by the normalized layer. We study grokking in small algorithmic tasks through this lens and propose \emph{Low-Rank Decay} (LRD), a nuclear-norm-like spectral regularizer whose subgradient---the polar factor $UV^\top$---retains a tangential component even in the scale-invariant setting. This distinction has a concrete dynamical consequence: after the model memorizes the training set and task gradients vanish, L2 decay can no longer reshape the weight spectrum, whereas LRD continues to compress singular values in an $\ell_1$-like fashion. On modular arithmetic tasks, we find that LRD induces rapid effective-rank collapse in Query/Key matrices and expands the data-fraction boundary at which delayed generalization (grokking) occurs. We further provide a spectral-geometric interpretation through the ``needle-to-fan'' expansion of the nuclear-norm subdifferential near low-rank strata.
\end{abstract}

\paragraph{Keywords.}
Grokking; scale invariance; spectral regularization; nuclear norm; Newton--Schulz iteration; rank collapse; training dynamics.

\section{Introduction}

Grokking refers to a delayed generalization phenomenon in which a model first memorizes the training set and only much later transitions to near-perfect test accuracy on algorithmic tasks \citep{power2022grokking}. This behavior suggests that training dynamics are not merely fitting labels, but may involve a transition from a high-capacity memorization solution to a lower-complexity algorithmic circuit.

At the same time, modern neural architectures increasingly rely on normalization layers. In Transformers \citep{vaswani2017attention}, RMSNorm \citep{zhang2019rmsnorm} and Query-Key Normalization (QK-Norm) normalize activations or attention queries/keys, making the resulting computation insensitive to scalar rescaling of certain weight matrices. This raises a basic question: \emph{if a layer is scale-invariant, what does standard weight decay actually regularize?}

The usual Frobenius penalty $\norm{W}_F^2$ controls the magnitude of a weight matrix. However, if the function is invariant under $W \mapsto \alpha W$ for $\alpha>0$, shrinking $\norm{W}_F$ does not directly simplify the function. We therefore explore the hypothesis that, for scale-invariant components, an appropriate inductive bias should act on the \emph{spectral shape} of the matrix, rather than only its scalar norm.

This draft investigates a simple method, \emph{Low-Rank Decay} (LRD), which applies a nuclear-norm-like decay to selected weight matrices. The nuclear norm $\norm{W}_* = \sum_i \sigma_i(W)$ acts as a convex surrogate for rank \citep{candes2009exact} and encourages spectral sparsity. To avoid per-step singular value decomposition (SVD), we approximate the polar factor $UV^\top$ of $W=U\Sigma V^\top$ using Newton--Schulz iteration and apply a decoupled decay step.

Our preliminary experiments on modular addition suggest that LRD induces rapid effective-rank collapse in attention Query/Key matrices and improves grokking under low data fractions. The main contributions of this draft are:

\begin{enumerate}[leftmargin=*]
    \item We formalize the mismatch between Frobenius-norm weight decay and scale-invariant Transformer components, showing that L2 decay acts purely radially and becomes ineffective once task gradients vanish.
    \item We introduce a decoupled Low-Rank Decay update based on a Newton--Schulz approximation of the nuclear-norm subgradient, with adaptive scaling to prevent catastrophic amplification at small weight norms.
    \item We present experimental evidence that LRD induces rank collapse and accelerates grokking, particularly on tasks where L2 fails due to gradient vanishing after memorization.
    \item We construct phase diagrams showing that LRD expands the data-fraction boundary for successful generalization compared to L2.
    \item We provide a spectral-geometric interpretation of LRD near low-rank strata, including the ``needle-to-fan'' expansion of the nuclear-norm subdifferential.
\end{enumerate}

\section{Background and Related Work}

\paragraph{Grokking.}
Grokking was introduced by \citet{power2022grokking} on algorithmic datasets such as modular arithmetic. Later work connected grokking to mechanistic interpretability, showing that generalization can correspond to the formation of structured circuits, for example Fourier or trigonometric circuits in modular addition \citep{nanda2023progress}. \citet{liu2022omnigrok} further extended grokking beyond algorithmic datasets to real-world tasks including image classification and sentiment analysis, indicating that this is a general property of overparameterized optimization rather than a task-specific artifact. Our work focuses on an optimization-side intervention that may accelerate or stabilize the transition toward such low-complexity solutions.

\paragraph{Spectral dynamics and low-rank bias.}
\citet{yunis2024grokking} found that the timing of generalization transitions in grokking closely tracks the emergence of low-rank solutions. In a broader study, \citet{yunis2024spectral} further demonstrated that this low-rank bias is a pervasive phenomenon in deep learning optimization, and that weight decay reinforces it beyond its role as a norm regularizer.

\paragraph{Scale-invariant optimization.}
A parameter block is scale-invariant if multiplying it by a positive scalar does not change the represented function. Prior work on normalization and scale-invariant parameters has shown that weight decay can alter effective learning rates by shrinking the parameter norm, even when it does not directly impose a functional complexity penalty \citep{arora2018optimization,li2020exponential}. We use this as motivation to consider spectral regularization instead of purely radial norm regularization.

\paragraph{Spectral regularization and low-rank bias.}
The singular spectrum of a weight matrix is a natural complexity measure. Nuclear-norm penalties encourage low-rank solutions and are widely used as convex surrogates for rank \citep{candes2009exact}. In deep learning, low-rank adaptation methods such as LoRA impose a hard architectural rank constraint \citep{hu2021lora}. In contrast, LRD is a soft regularizer: it encourages rank reduction but allows task gradients to maintain or regrow singular directions when needed.

\section{Scale Invariance and the Limitation of Weight Decay}

\begin{definition}[Scale invariance]
A function $f(x; W)$ is scale-invariant with respect to $W$ if
\begin{equation}
    f(x; \alpha W) = f(x; W), \qquad \forall \alpha>0.
\end{equation}
\end{definition}

For an attention head with QK-Norm, attention logits take the schematic form
\begin{equation}
    A(X) = \operatorname{softmax}\left(\frac{\operatorname{Norm}(XW_Q)\operatorname{Norm}(XW_K)^\top}{\sqrt{d_h}}\right),
\end{equation}
where $\operatorname{Norm}(z)$ removes the scale of $z$. Hence scaling $W_Q$ or $W_K$ by a positive scalar does not change the normalized queries or keys.

Let $W=\rho\Theta$, where $\rho=\norm{W}_F$ and $\norm{\Theta}_F=1$. Suppose the task loss $\mathcal{L}_{\mathrm{task}}(W)$ depends only on $\Theta$. Consider the continuous-time objective
\begin{equation}
    J(W) = \mathcal{L}_{\mathrm{task}}(W) + \frac{\lambda}{2}\norm{W}_F^2.
\end{equation}
The gradient flow is
\begin{equation}
    \dot W = -\nabla_W \mathcal{L}_{\mathrm{task}}(W) - \lambda W.
\end{equation}
Since $\mathcal{L}_{\mathrm{task}}$ is homogeneous of degree zero with respect to $W$, Euler's homogeneous function theorem gives
\begin{equation}
    \inner{\nabla_W \mathcal{L}_{\mathrm{task}}(W)}{W}=0.
\end{equation}
Thus the task gradient is tangent to the sphere of constant Frobenius norm, while the weight-decay term is radial.

\begin{proposition}[Radial decoupling of Frobenius weight decay]
Under the idealized assumptions above, Frobenius-norm weight decay affects only the radial scale $\rho$ and has no direct projection onto the angular dynamics of $\Theta$.
\end{proposition}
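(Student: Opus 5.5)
The plan is to decompose the gradient flow into polar coordinates $W=\rho\Theta$ and derive separate evolution equations for $\rho$ and $\Theta$. We then check which terms of $\dot W=-\nabla_W\mathcal{L}_{\mathrm{task}}(W)-\lambda W$ enter each equation. We assume $W\neq 0$ throughout, so that $\rho>0$ and $\Theta=W/\rho$ is differentiable along the trajectory.

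\textbf{Radial equation.} Differentiate $\rho^2=\inner{W}{W}$ to get $\rho\dot\rho=\inner{W}{\dot W}$. Substituting the flow and using the Euler identity $\inner{\nabla_W\mathcal{L}_{\mathrm{task}}(W)}{W}=0$ stated above leaves only the decay term:
\begin{equation*}
\rho\dot\rho=-\lambda\rho^2, \qquad\text{so}\qquad \dot\rho=-\lambda\rho .
\end{equation*}
Hence the task gradient does not enter the radial dynamics, and weight decay acts on $\rho$ alone. In continuous time this gives $\rho(t)=\rho(0)e^{-\lambda t}$.

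\textbf{Angular equation.} Write $\dot\Theta=\dot W/\rho-(\dot\rho/\rho)\Theta$, which can be rewritten as
\begin{equation*}
\dot\Theta=\rho^{-1}\bigl(I-\Theta\otimes\Theta\bigr)\dot W ,
\end{equation*}
the projection of $\dot W$ onto the tangent space of the unit sphere at $\Theta$, divided by $\rho$. The decay contribution is $-\lambda W=-\lambda\rho\Theta$, which is purely radial, so this projection annihilates it. The task gradient is already tangent by the Euler identity, so the projection leaves it unchanged. This gives
\begin{equation*}
\dot\Theta=-\rho^{-1}\nabla_W\mathcal{L}_{\mathrm{task}}(W).
\end{equation*}
Using degree-zero homogeneity, $\nabla_W\mathcal{L}_{\mathrm{task}}(\rho\Theta)=\rho^{-1}\nabla\mathcal{L}_{\mathrm{task}}(\Theta)$, because the gradient is homogeneous of degree $-1$ (differentiate $\mathcal{L}(\alpha W)=\mathcal{L}(W)$ in $W$). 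Therefore
\begin{equation*}
\dot\Theta=-\rho^{-2}\nabla\mathcal{L}_{\mathrm{task}}(\Theta).
\end{equation*}
In this equation $\lambda$ appears only indirectly, through the time-rescaling factor $\rho^{-2}$ (the effective learning rate). It has no direct projection onto the angular direction. This is exactly the claim of the proposition.

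\textbf{Main obstacle.} The only delicate point is making precise what ``no direct projection'' means, since $\lambda$ does influence the angular speed through $\rho(t)$. I would handle this by stating explicitly that the \emph{direction} field driving $\Theta$ is independent of $\lambda$. Equivalently, after the reparametrization $d\tau=\rho^{-2}\,dt$, the trajectory of $\Theta$ is the spherical gradient flow of $\mathcal{L}_{\mathrm{task}}$, identical for every $\lambda$. I would then note as a corollary that if $\nabla\mathcal{L}_{\mathrm{task}}(\Theta)=0$ (post-memorization), then $\Theta$ is stationary regardless of $\lambda$. This is the consequence used later in the paper.
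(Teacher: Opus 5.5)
Your proposal is correct and takes the same route as the paper's sketch: you get $\dot\rho=-\lambda\rho$ from the Euler identity, and you note that the tangential projection $(I-\Theta\otimes\Theta)$ annihilates the radial decay term $-\lambda\rho\Theta$. Your explicit angular equation $\dot\Theta=-\rho^{-2}\nabla\mathcal{L}_{\mathrm{task}}(\Theta)$ and the reparametrization $d\tau=\rho^{-2}\,dt$ usefully sharpen the paper's argument: they make ``no direct projection'' precise and isolate the indirect effective-learning-rate effect that the paper only mentions in its remark.
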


\begin{proof}[Sketch]
The radial dynamics satisfy
\begin{equation}
    \dot \rho = \inner{\dot W}{\Theta}
    = -\inner{\nabla_W \mathcal{L}_{\mathrm{task}}}{\Theta} - \lambda \inner{W}{\Theta}
    = -\lambda \rho.
\end{equation}
For the angular variable, the projection of $-\lambda W=-\lambda\rho\Theta$ onto the tangent space at $\Theta$ vanishes. Hence the decay term changes $\rho$ but not the angular direction directly.
\end{proof}

\begin{remark}
This is an idealized continuous-time statement. In practical training with AdamW, finite steps, non-normalized parameters, and approximate scale invariance, weight decay can still influence optimization indirectly. The claim is not that weight decay is useless, but that it is not a direct functional complexity prior for perfectly scale-invariant parameter blocks.
\end{remark}

\section{Low-Rank Decay}

\subsection{Nuclear-norm objective}

We replace the radial Frobenius penalty with a spectral penalty:
\begin{equation}
    J_{\mathrm{LRD}}(W) = \mathcal{L}_{\mathrm{task}}(W) + \lambda \norm{W}_*,
    \qquad \norm{W}_* = \sum_i \sigma_i(W).
\end{equation}
If $W=U\Sigma V^\top$ is full-rank and has nonzero singular values, one subgradient of $\norm{W}_*$ is
\begin{equation}
    \nabla_W \norm{W}_* = UV^\top = \polar(W).
\end{equation}
This polar factor preserves the singular-vector geometry while discarding singular magnitudes. A decay step along $UV^\top$ reduces singular values in an $\ell_1$-like manner, encouraging small singular values to vanish.

\subsection{Newton--Schulz approximation}

Exact SVD is expensive inside every optimizer step. We therefore approximate the polar factor using Newton--Schulz iteration:
\begin{equation}
    X_0 = \frac{W}{\norm{W}_F + \epsilon},
    \qquad
    X_{k+1} = \frac{1}{2} X_k(3I - X_k^\top X_k).
\end{equation}
For rectangular matrices, one can use the corresponding left- or right-oriented variant depending on matrix shape. In practice, a small fixed number of iterations is used.

\begin{algorithm}[t]
\caption{Decoupled Low-Rank Decay via Newton--Schulz}
\label{alg:lrd}
\begin{algorithmic}[1]
\Require Weight matrix $W$, learning rate $\eta$, decay coefficient $\lambda$, iterations $K$, small $\epsilon>0$
\State $X \gets W/(\norm{W}_F+\epsilon)$
\For{$k=1$ to $K$}
    \State $A \gets X^\top X$
    \State $X \gets \frac{1}{2}X(3I-A)$
\EndFor
\State $G_{\mathrm{reg}} \gets \lambda X$
\State $W \gets W - \eta G_{\mathrm{reg}}$
\end{algorithmic}
\end{algorithm}

\subsection{Subdifferential geometry near low-rank strata}

The nuclear norm becomes non-smooth at rank-deficient matrices. Let $W=U_r\Sigma_r V_r^\top$ be a compact SVD with rank $r$. The subdifferential is
\begin{equation}
\label{eq:subdiff}
    \partial \norm{W}_*
    = \left\{U_rV_r^\top + Z:\ U_r^\top Z=0,\ ZV_r=0,\ \norm{Z}_2\leq 1\right\}.
\end{equation}
When $W$ is square and full-rank, the orthogonal complements are trivial and the subgradient is unique. As rank decreases, the zero-singular-value subspaces enlarge, and the admissible $Z$ directions expand. Geometrically, the subgradient changes from a near-unique ``needle'' direction to a fan-like set over the null singular subspaces.

This suggests the following interpretation of LRD. Early in training, when weights are effectively full-rank, LRD provides a clear polar-factor shrinkage direction. Later, near low-rank strata, the regularizer becomes less prescriptive: it has already removed many weak spectral directions, while task gradients can decide which remaining singular subspaces should be preserved or regrown. This gives LRD a soft-selection behavior rather than a hard fixed-rank constraint.

\section{Experimental Setup}

\paragraph{Task.}
We use modular addition:
\begin{equation}
    c = (a+b) \bmod p,
\end{equation}
with $p=97$ unless otherwise stated. The full dataset consists of all ordered pairs $(a,b)$.

\paragraph{Model.}
The model is a 2-layer Transformer with model dimension $d_{\mathrm{model}}=128$ and 4 attention heads. Pre-RMSNorm and QK-Norm are enabled to emphasize scale-invariant attention components.

\paragraph{Baselines.}
We compare three regularization strategies, all applied in a \emph{decoupled} fashion (after the optimizer step, not through the loss):
\begin{itemize}[leftmargin=*]
    \item \textbf{L2 (Frobenius weight decay):} $W \leftarrow W \cdot (1 - \eta\lambda)$, applied to all non-bias, non-norm parameters.
    \item \textbf{LRD Decoupled:} $W \leftarrow W - \eta\lambda \cdot \frac{\norm{W}_F}{\norm{\polar(W)}_F + \epsilon} \cdot \polar(W)$, applied only to attention and MLP weight matrices within transformer blocks. The adaptive scaling factor $\norm{W}_F / \norm{\polar(W)}_F$ prevents catastrophic amplification when the weight norm is small.
    \item \textbf{LRD Elastic:} Both LRD and L2 applied jointly with independent coefficients $\lambda_{\mathrm{lrd}}$ and $\lambda_{l2}$.
\end{itemize}

\paragraph{Metrics.}
We track train accuracy, test accuracy, loss, weight norms, gradient norms, singular values, and stable rank. For a matrix $W$, stable rank is defined as
\begin{equation}
    \srank(W) = \frac{\norm{W}_F^2}{\norm{W}_2^2}.
\end{equation}
We report stable rank for Query/Key matrices during training.

\paragraph{Grid.}
We sweep training data fraction $f \in \{0.3, 0.35, 0.4, 0.45, 0.5, 0.55, 0.6, 0.65, 0.7, 0.75, 0.8\}$ and regularization strength $\lambda \in \{10^{-4}, 5\times10^{-4}, 10^{-3}, 5\times10^{-3}, 10^{-2}, 5\times10^{-2}\}$ to construct phase diagrams of generalization. A run is counted as successful grokking if it reaches test accuracy above $95\%$.

\section{Results}

\subsection{Phase diagram of generalization}

Figure~\ref{fig:phase} shows the phase diagram for modular addition across training data fractions and decay strengths. L2 weight decay achieves high test accuracy only in the upper-right region (large data fraction and strong decay), while spectral decay (LRD) expands the successful grokking region substantially, particularly at moderate data fractions ($f=0.4$--$0.6$) and intermediate decay strengths ($\lambda = 5\times10^{-4}$ to $5\times10^{-3}$). This expansion is consistent with the hypothesis that nuclear-norm regularization provides a more effective inductive bias than Frobenius-norm regularization in scale-invariant networks.

\begin{figure}[t]
    \centering
    \includegraphics[width=0.95\linewidth]{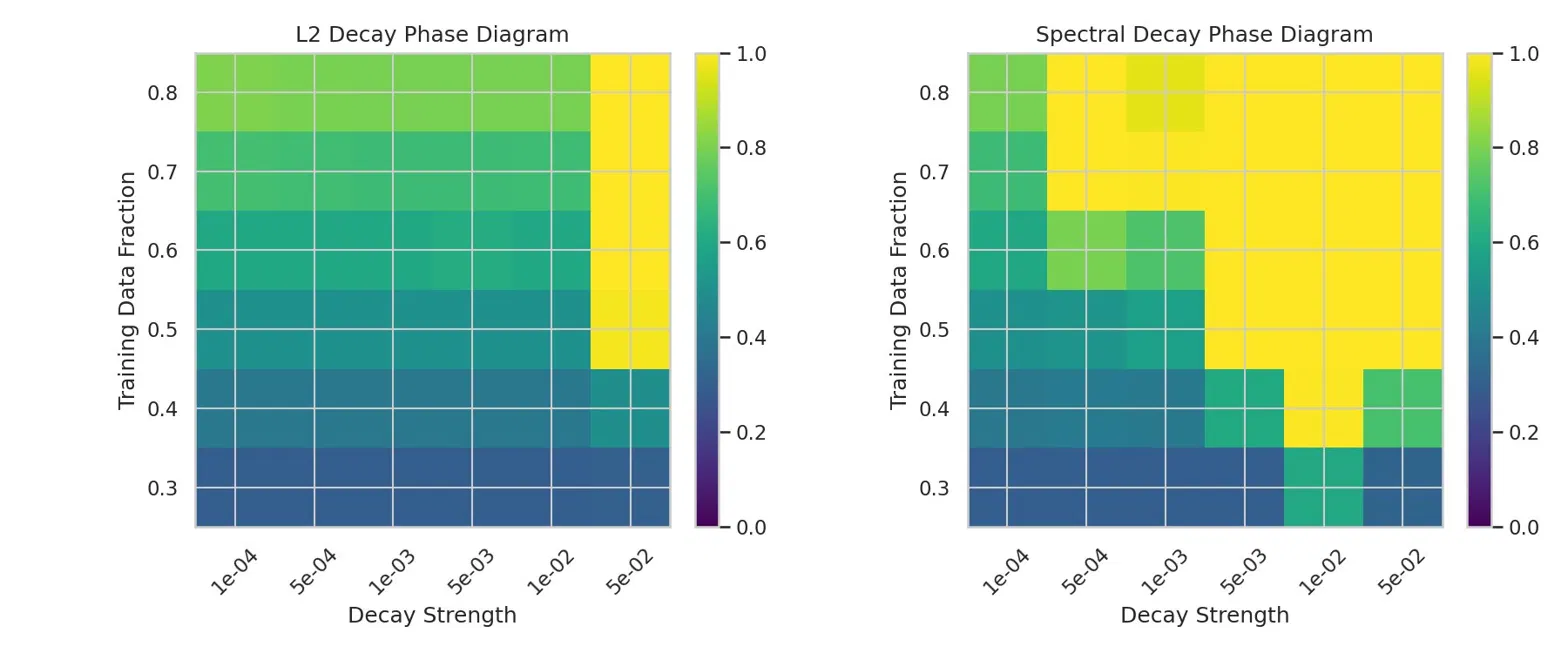}
    \caption{Phase diagram of modular-addition generalization. Left: L2 decay. Right: Spectral (LRD) decay. Color indicates final test accuracy after fixed training steps. LRD expands the grokking boundary to lower data fractions and a wider range of decay strengths.}
    \label{fig:phase}
\end{figure}

\subsection{Grokking speed and rank collapse dynamics}

Figure~\ref{fig:rank} compares grokking speed, effective rank, and singular value spectra for L2 and LRD on modular addition ($f=0.4$). LRD achieves test accuracy $>0.95$ within 3000 steps, while L2 remains below 0.5 over the same period. The effective rank of Query/Key matrices under LRD drops sharply from $\sim$35 to $\sim$3 within the first 500 steps, whereas L2 maintains a rank of $\sim$25 throughout.

The singular value spectrum (normalized by $\sigma_1$) reveals a qualitative difference: under LRD, singular values beyond index $\sim$10 drop by 4--5 orders of magnitude, indicating strong spectral sparsity. Under L2, the spectrum decays more gradually, with tail singular values remaining within 1--2 orders of $\sigma_1$.

The bottom row of Figure~\ref{fig:rank} visualizes the QK attention patterns ($|W_Q W_K^\top|$). L2 produces a noisy, unstructured pattern, while LRD yields a visibly structured pattern with clear block-diagonal features, consistent with the model learning an algorithmic circuit.

\begin{figure}[t]
    \centering
    \includegraphics[width=0.95\linewidth]{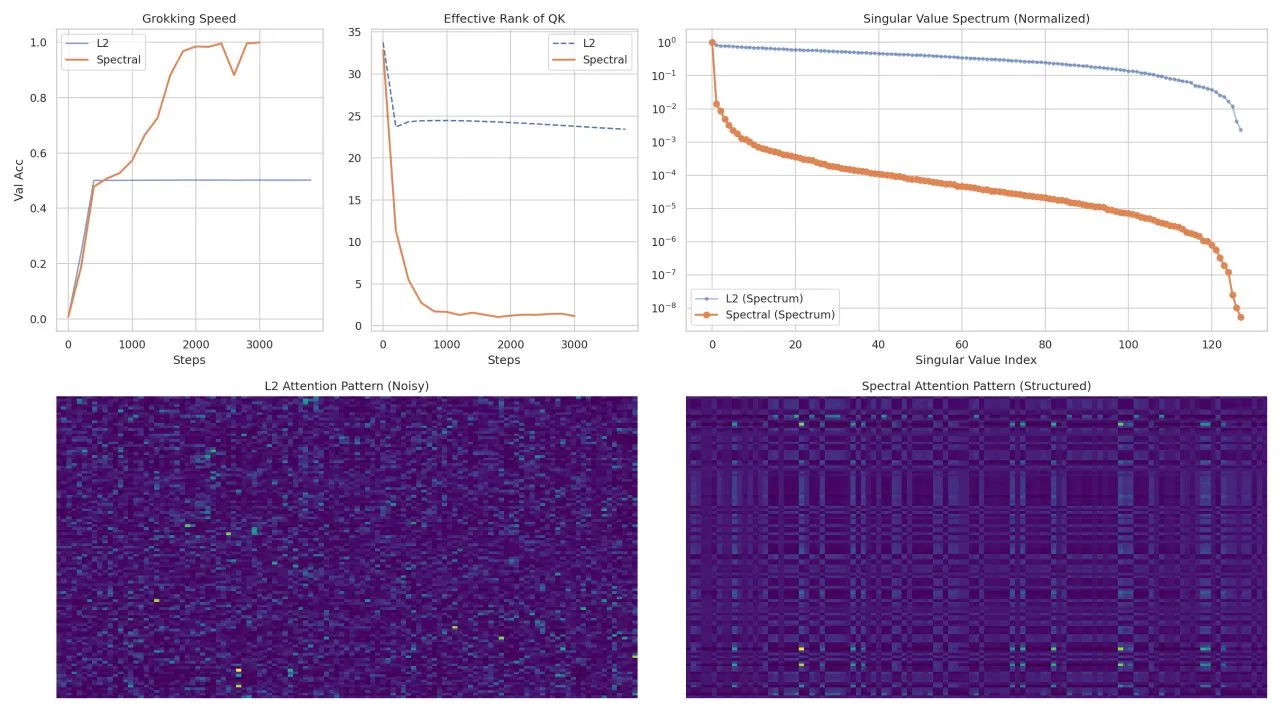}
    \caption{Top left: Grokking speed (test accuracy vs.\ training steps). Top center: Effective rank (stable rank) of QK matrices. Top right: Normalized singular value spectrum at end of training (log scale). Bottom: QK attention patterns for L2 (left) and LRD (right).}
    \label{fig:rank}
\end{figure}

\subsection{The gradient-vanishing regime: why L2 stalls}

A critical observation is the behavior after memorization. Once the model achieves perfect training accuracy, the task loss reaches zero and gradient norms vanish ($\norm{\nabla \mathcal{L}_{\mathrm{task}}}\approx 0$). In this regime:

\begin{itemize}[leftmargin=*]
    \item \textbf{L2 decay} continues to apply $W \leftarrow W(1-\eta\lambda)$, which shrinks $\norm{W}_F$ exponentially but does not change the angular direction $\Theta = W/\norm{W}_F$. Since the network is scale-invariant, this has no functional effect. The model remains trapped in the memorization solution.
    \item \textbf{LRD} continues to apply $W \leftarrow W - \eta\lambda\cdot\polar(W)$. The polar factor $UV^\top$ is independent of the scale of $W$ and provides a directional update: it subtracts a constant from each singular value, preferentially eliminating small ones. This reshapes the singular spectrum even without task gradients, eventually driving the weight into a low-rank configuration that supports generalization.
\end{itemize}

This asymmetry---L2 acts radially and becomes inert in scale-invariant networks after gradient vanishing, while LRD retains a tangential spectral-shaping component---is the central mechanistic distinction.

Table~\ref{tab:comparison} summarizes the key differences.

\begin{table}[t]
\centering
\caption{Comparison of L2 and LRD regularization in the scale-invariant setting.}
\label{tab:comparison}
\begin{tabular}{@{}lll@{}}
\toprule
\textbf{Property} & \textbf{L2 Weight Decay} & \textbf{Low-Rank Decay (LRD)} \\
\midrule
Penalty & $\norm{W}_F^2 = \sum_i \sigma_i^2$ & $\norm{W}_* = \sum_i \sigma_i$ \\
Effect on $\sigma_i$ & $\sigma_i \leftarrow \sigma_i(1-\eta\lambda)$ (multiplicative) & $\sigma_i \leftarrow \sigma_i - \eta\lambda$ (subtractive) \\
Stable rank & Unchanged & Decreases \\
Gradient direction & $\nabla\norm{W}_F^2 = 2W$ (radial) & $\nabla\norm{W}_* = UV^\top$ (tangential component) \\
After $\nabla\mathcal{L}=0$ & Inert (scale-invariant) & Active (spectral reshaping) \\
\bottomrule
\end{tabular}
\end{table}

\section{Discussion}

\subsection{LRD as tangential spectral pressure}

The key insight of this work is not merely that LRD encourages low rank, but \emph{why} it remains effective in a regime where L2 fails. In scale-invariant networks, the loss landscape has a degenerate radial direction: any rescaling $W\mapsto\alpha W$ leaves the loss unchanged. L2 weight decay acts along this degenerate direction and thus cannot exert functional regularization pressure after gradient vanishing. LRD, by contrast, acts through the polar factor $UV^\top$, which has a nontrivial tangential component that reshapes the singular spectrum independently of the weight scale.

This distinction becomes operationally important precisely when the model has memorized the training set. At that point, task gradients vanish and the only forces acting on the weights are the regularizers. L2 continues to shrink $\norm{W}$ without functional consequence; LRD continues to compress the singular spectrum, eventually driving the weights into a low-rank configuration that aligns with an algorithmic (generalizing) solution.

\subsection{Connection to spectral dynamics literature}

Our findings are consistent with the spectral dynamics framework of \citet{yunis2024spectral}, who observed that weight decay strengthens an implicit low-rank bias across diverse architectures. Our contribution is complementary: we show that \emph{explicit} spectral regularization (LRD) can accelerate and extend this bias, particularly in settings where implicit bias alone is insufficient (low data fractions, scale-invariant architectures).

Recently, \citet{xu2026lowdim} proposed a geometric account of grokking in which weight trajectories are confined to a low-dimensional execution manifold, with transverse curvature accumulating until the trajectory escapes into the generalizing solution. LRD's spectral compression may interact with the dimensionality of this execution manifold, a direction worth further exploration.

\subsection{Limitations}

This work has several important limitations. Current evidence is based primarily on modular addition; extension to modular multiplication, permutation composition (S5), and other tasks is needed to establish generality. Multi-seed results with confidence intervals are needed to rule out seed-dependent effects. We have not established a causal link between rank collapse and generalization---temporal precedence alone is insufficient. The interaction between LRD and the optimizer's implicit spectral bias (observed even without regularization) requires further disentanglement. Finally, training instabilities (rank oscillations, gradient spikes) observed in some configurations are not fully understood.

\section{Conclusion}

We presented Low-Rank Decay, a nuclear-norm-like spectral regularizer for grokking in scale-invariant Transformers. The central insight is that, when QK-Norm renders attention layers scale-invariant, Frobenius-norm weight decay acts purely radially and becomes functionally inert after gradient vanishing---while LRD's polar-factor update retains a tangential component that continues to reshape the singular spectrum. This mechanistic distinction manifests empirically: LRD induces rapid rank collapse and accelerates grokking on modular arithmetic, expanding the data-fraction boundary for successful generalization compared to L2 weight decay. Phase diagrams confirm that the grokking region under LRD is substantially larger, particularly at moderate data fractions where L2 fails entirely.

Future work includes extending the evaluation to additional algebraic tasks (modular multiplication, permutation composition), establishing causality through targeted spectral interventions, and exploring connections between LRD-induced low-rank structure and the execution manifold geometry recently identified by \citet{xu2026lowdim}.

\bibliographystyle{plainnat}

\end{document}